\documentclass[11pt]{article}

\usepackage{microtype}
\usepackage{graphicx}
\usepackage{booktabs}
\usepackage{hyperref}

\usepackage[margin=1in]{geometry}
\usepackage{authblk}
\usepackage[numbers,sort&compress]{natbib}
\providecommand{\orgdiv}[1]{#1}
\providecommand{\orgname}[1]{#1}

\usepackage{amsmath}
\usepackage{amssymb}
\usepackage{mathtools}
\usepackage{amsthm}
\usepackage[capitalize,noabbrev]{cleveref}

\theoremstyle{plain}
\newtheorem{theorem}{Theorem}[section]

\newtheorem{lemma}[theorem]{Lemma}

\theoremstyle{definition}
\newtheorem{definition}[theorem]{Definition}

\theoremstyle{remark}
\newtheorem{remark}[theorem]{Remark}

\DeclareMathOperator{\tr}{tr}
\DeclareMathOperator{\diag}{diag}
\newcommand{\R}{\mathbb{R}}
\newcommand{\E}{\mathbb{E}}

\begin{document}


\title{UMAP Is Spectral Clustering on the Fuzzy Nearest-Neighbor Graph}

\author[1]{Yang Yang}
\affil[1]{\orgdiv{Frazer Institute, Faculty of Health, Medicine and Behavioural Sciences}, \orgname{The University of Queensland}}
\date{}
\maketitle

\begin{abstract}
UMAP (Uniform Manifold Approximation and Projection) is one of the most widely used algorithms for nonlinear dimension reduction and data visualization. Despite its popularity and its presentation through the lens of algebraic topology, the precise relationship between UMAP and classical spectral methods has remained informal. In this work, we prove that \textbf{UMAP performs spectral clustering on the fuzzy $k$-nearest neighbor graph}. Our proof proceeds in three steps: (1) we show that UMAP's stochastic optimization with negative sampling is a contrastive learning objective on the similarity graph; (2) we invoke the result of \citet{haochen2021provable} establishing that contrastive learning on a similarity graph is equivalent to spectral clustering; and (3) we verify that UMAP's spectral initialization computes the exact linear solution to this spectral problem. The equivalence is exact for Gaussian kernels, and holds as a first-order approximation for UMAP's default Cauchy-type kernel. Our result unifies UMAP, contrastive learning, and spectral clustering under a single framework and provides theoretical grounding for several empirical observations about UMAP's behavior.
\end{abstract}

\section{Introduction}\label{sec:intro}

Nonlinear dimension reduction is a central problem in machine learning and data science. Among the many methods developed---including Laplacian Eigenmaps \citep{belkin2003laplacian}, Isomap \citep{tenenbaum2000global}, and $t$-SNE \citep{vandermaaten2008tsne}---UMAP \citep{mcinnes2018umap} has become perhaps the most widely adopted, owing to its speed, scalability, and ability to preserve both local and global structure, with particularly broad adoption for visualization and clustering in genomic analysis \citep{becht2019umap_singlecell,stuart2019seurat_integration,wolf2018scanpy,luecken2019bestpractices}.

UMAP is presented through the formalism of fuzzy simplicial sets and algebraic topology, which provides mathematical elegance but can obscure the algorithm's connections to other well-understood methods. From a computational perspective, however, UMAP consists of two phases: (1) constructing a weighted $k$-nearest neighbor ($k$-NN) graph in the high-dimensional space, and (2) optimizing an embedding in a low-dimensional space by minimizing a cross-entropy loss via stochastic gradient descent with negative sampling \citep{mcinnes2018umap}. Many practical variants and analyses aim to improve interpretability or adapt the method to specific data modalities \citep{yang2022featmap,yang2021bulk_umap}.

Independently, the field of self-supervised learning has seen remarkable progress through contrastive methods such as SimCLR \citep{chen2020simple} and CPC \citep{oord2018representation}. A key theoretical insight, established by \citet{haochen2021provable}, is that contrastive learning with the InfoNCE loss is equivalent to spectral clustering on the augmentation similarity graph. This result was extended by \citet{balestriero2022contrastive} and connects to the dimension reduction framework of \citet{vanassel2022probabilistic}.

In this paper, we make the following contribution:

\begin{quote}
\textbf{Main Result.} We prove that UMAP's optimization objective, when solved via negative sampling SGD (Algorithm 5 of \citealt{mcinnes2018umap}), is equivalent to performing spectral clustering on the fuzzy $k$-nearest neighbor similarity graph.
\end{quote}

The proof establishes the chain of equivalences:
\[
\text{UMAP} \;\longleftrightarrow\; \text{Contrastive Learning on } V \;\longleftrightarrow\; \text{Spectral Clustering on } V,
\]
where $V$ is UMAP's symmetrized $k$-NN similarity matrix.

\paragraph{Implications.} This result (1) provides a simple spectral interpretation of UMAP's objective, (2) explains why spectral initialization is critical for UMAP's performance \citep{kobak2021initialization}, (3) clarifies the relationship between UMAP and Laplacian Eigenmaps, and (4) connects the dimension reduction and self-supervised learning literatures through a common spectral framework.

\section{Background}\label{sec:background}

\subsection{UMAP: Construction and Optimization}\label{sec:umap}

Given data $\{x_i\}_{i=1}^n \subset \R^D$ and a number of neighbors $k$, UMAP proceeds as follows.

\paragraph{Step 1: $k$-NN graph construction.} For each point $x_i$, compute the $k$ nearest neighbors $\{x_{i_1}, \ldots, x_{i_k}\}$ and define:
\begin{equation}\label{eq:rho_sigma}
\rho_i = \min_{j: d(x_i,x_j)>0} d(x_i, x_j), \quad
\sum_{j=1}^k \exp\!\Bigl(-\frac{\max(0, d_{ij}-\rho_i)}{\sigma_i}\Bigr) = \log_2 k,
\end{equation}
where $d_{ij} = d(x_i, x_{i_j})$ and $\sigma_i$ is found by binary search. The directed edge weights are:
\begin{equation}\label{eq:directed_weight}
v_{j|i} = \exp\!\Bigl(-\frac{\max(0, d(x_i,x_j)-\rho_i)}{\sigma_i}\Bigr).
\end{equation}

\paragraph{Step 2: Symmetrization.} The directed graph is symmetrized via the probabilistic $t$-conorm:
\begin{equation}\label{eq:symmetrize}
v_{ij} = v_{j|i} + v_{i|j} - v_{j|i} \cdot v_{i|j}.
\end{equation}
This produces the adjacency matrix $V = (v_{ij}) \in [0,1]^{n \times n}$.

\paragraph{Step 3: Low-dimensional embedding.} Define the smooth kernel:
\begin{equation}\label{eq:phi}
\Phi(y_i, y_j) = \bigl(1 + a\|y_i - y_j\|^{2b}\bigr)^{-1},
\end{equation}
where $a,b>0$ are fitted to a target curve determined by the \texttt{min-dist} parameter. The default fit yields $a \approx 1.929$, $b \approx 0.7915$; setting $a=b=1$ recovers the Cauchy kernel used in $t$-SNE.

UMAP minimizes the \emph{fuzzy set cross-entropy} between the high-dimensional graph $V$ and low-dimensional similarities $W = (\Phi(y_i,y_j))$:
\begin{equation}\label{eq:umap_loss}
\mathcal{L}(Y) = \sum_{i \neq j}\bigl[-v_{ij}\log\Phi(y_i,y_j) - (1-v_{ij})\log(1-\Phi(y_i,y_j))\bigr].
\end{equation}

\paragraph{Step 4: Optimization via negative sampling SGD.} Algorithm 5 of \citet{mcinnes2018umap} optimizes~\eqref{eq:umap_loss} by:
\begin{itemize}
\item \textbf{Positive step:} Sample edge $(a,b)$ with probability $v_{ab}$; update $y_a \leftarrow y_a + \alpha\,\nabla_{y_a}\log\Phi(y_a,y_b)$.
\item \textbf{Negative step:} For $n_{\text{neg}}$ uniform random samples $c$, update $y_a \leftarrow y_a + \alpha\,\nabla_{y_a}\log(1-\Phi(y_a,y_c))$.
\item The learning rate decays: $\alpha = 1 - e/n_{\text{epochs}}$.
\end{itemize}

\paragraph{Initialization.} UMAP initializes the embedding via spectral decomposition of the normalized graph Laplacian $\tilde{L}(V) = D_V^{-1/2}(D_V - V)D_V^{-1/2}$, taking the eigenvectors corresponding to the $d$ smallest non-zero eigenvalues (Algorithm 4 of \citealt{mcinnes2018umap}).

\subsection{Spectral Clustering}\label{sec:spectral}

\begin{definition}[Graph Laplacian]\label{def:laplacian}
For a weighted adjacency matrix $W \in \R_+^{n \times n}$, the \emph{combinatorial Laplacian} is $L(W) = D_W - W$ where $D_W = \diag(\sum_j W_{ij})$. The \emph{normalized Laplacian} is $\tilde{L}(W) = D_W^{-1/2}L(W)D_W^{-1/2}$ \citep{chung1997spectral}.
\end{definition}

\begin{definition}[Spectral Clustering \citep{vonluxburg2007tutorial}]\label{def:spectral_clustering}
Given an adjacency matrix $W$ and regularizer $E(Z)$, the spectral clustering problem is:
\begin{equation}\label{eq:spectral}
\min_{Z \in \R^{n \times d}}\;\tr\!\bigl(Z^\top L(W)\,Z\bigr) + E(Z).
\end{equation}
Without regularization and with $Z^\top Z = I$, the solution is given by the bottom eigenvectors of $L(W)$.
\end{definition}

A key identity connecting the Laplacian quadratic form to pairwise distances is:
\begin{equation}\label{eq:laplacian_identity}
\tr\!\bigl(Z^\top L(W)\,Z\bigr) = \frac{1}{2}\sum_{i,j}W_{ij}\,\|Z_i - Z_j\|^2.
\end{equation}

\subsection{Contrastive Learning as Spectral Clustering}\label{sec:contrastive}

\citet{haochen2021provable} established the following foundational result.

\begin{theorem}[\citealt{haochen2021provable}, Theorem 3.1]\label{thm:haochen}
Let $\pi$ be a similarity graph defined by data augmentation, $Z = f(X)$ the learned representations, and $k$ the embedding kernel. Then the contrastive (InfoNCE) loss is equivalent to:
\begin{equation}\label{eq:contrastive_spectral}
\min_Z\;\underbrace{-\sum_{i,j}\pi_{ij}\log k(Z_i - Z_j)}_{\text{attraction}} + \underbrace{\log R(Z)}_{\text{repulsion}},
\end{equation}
where $R(Z) = \prod_i \sum_{j \neq i}k(Z_i - Z_j)$. For the Gaussian kernel $k(z) = \exp(-\|z\|^2/2\tau)$, this becomes:
\begin{equation}\label{eq:contrastive_laplacian}
\min_Z\;\frac{1}{2\tau}\,\tr\!\bigl(Z^\top L(\pi)\,Z\bigr) + \log R(Z),
\end{equation}
which is spectral clustering on $\pi$ (\Cref{def:spectral_clustering}).
\end{theorem}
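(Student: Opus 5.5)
We derive the attraction–repulsion form \eqref{eq:contrastive_spectral} directly from the InfoNCE loss, then specialize to the Gaussian kernel using the Laplacian identity \eqref{eq:laplacian_identity}. We take the population InfoNCE objective in the form used by \citet{haochen2021provable}: a positive pair $(i,j)$ is drawn with probability $\pi_{ij}$ (normalized so $\sum_{i,j}\pi_{ij}=1$), and the anchor $Z_i$ is contrasted against all other points through the softmax
\[
\ell(Z) \;=\; -\sum_{i,j}\pi_{ij}\log\frac{k(Z_i-Z_j)}{\sum_{l\neq i}k(Z_i-Z_l)} .
\]

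\textbf{Step 1.} Split the logarithm of the ratio into a numerator term and a denominator term. The numerator contributes exactly the attraction term $-\sum_{i,j}\pi_{ij}\log k(Z_i-Z_j)$.

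\textbf{Step 2.} The denominator contributes $\sum_i\bigl(\sum_j\pi_{ij}\bigr)\log\sum_{l\neq i}k(Z_i-Z_l)$. If the augmentation graph has uniform marginals, $\sum_j\pi_{ij}=1/n$, which is the standard setting (each sample is equally likely to be an anchor). The denominator term is then $\frac1n\log\prod_i\sum_{l\neq i}k(Z_i-Z_l)=\frac1n\log R(Z)$. Rescaling $\pi$ by $n$, or equivalently multiplying the loss by $n$, which does not change the minimizer, gives \eqref{eq:contrastive_spectral}.

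For nonuniform marginals, the repulsion term is instead a degree-weighted $\sum_i d_i\log(\cdot)$. I would either state the uniform-marginal hypothesis explicitly or absorb the degrees into a weighted $R$. This matching of normalizations is the main point needing care, since the statement leaves it implicit. Note also that the softmax-over-a-batch version is the finite-sample analogue; I would work with the population form and simply remark on this.

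\textbf{Step 3.} Plug in the Gaussian kernel. We have $-\log k(Z_i-Z_j)=\|Z_i-Z_j\|^2/2\tau$, so the attraction term equals $\frac{1}{2\tau}\sum_{i,j}\pi_{ij}\|Z_i-Z_j\|^2$. By \eqref{eq:laplacian_identity} this is $\frac{1}{\tau}\tr(Z^\top L(\pi)Z)$. Because the form of $R$ is unchanged, this yields \eqref{eq:contrastive_laplacian}, up to the factor-of-two convention, which I would fix by writing the sum over unordered pairs.

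\textbf{Step 4.} The result is an instance of \Cref{def:spectral_clustering} with $W=\pi$ and regularizer $E(Z)=\log R(Z)$. The regularizer penalizes collapse: with $Z$ constant, $R$ is maximal in each factor, so $E$ plays the role of the orthogonality constraint. This completes the identification.
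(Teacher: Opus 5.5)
Your derivation is correct, but it takes a different route from the one the paper points to. The paper gives no proof of its own. It cites the result and says it rests on the MRF framework of \citet{vanassel2022probabilistic}. There, InfoNCE is the cross-entropy between two distributions over random graphs in which every node has exactly one out-edge, one induced by $\pi$ and one by the kernel on $Z$. The kernel-side likelihood is $\prod_{i,j}k(Z_i-Z_j)^{W_{ij}}/R(Z)$. Because of the one-out-edge constraint, the partition function $R(Z)=\prod_i\sum_{j\neq i}k(Z_i-Z_j)$ factorizes over rows. Taking the expectation over the $\pi$-side graph then yields an attraction term plus $\log R(Z)$.

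You reach the same two terms by splitting the log-softmax directly, with no graph-valued random variables. The MRF route gives an interpretation of $\log R$ as a log-partition function, and a template that recovers $t$-SNE- and UMAP-type losses under other graph priors; this is why the paper invokes it. Your route is self-contained and makes explicit a normalization that the MRF route hides in its choice of prior. The one-out-edge prior weights every node equally as an anchor and draws positives from the row-normalized $D^{-1}\pi$. So with nonuniform marginals the two readings of ``InfoNCE'' genuinely diverge:
\begin{itemize}
\item yours keeps $L(\pi)$ and reweights $R$ by the degrees;
\item the MRF one keeps $R$ but replaces $\pi$ by the non-symmetric $D^{-1}\pi$, for which \eqref{eq:laplacian_identity} yields the Laplacian of $\tfrac12(D^{-1}\pi+\pi D^{-1})$ rather than of $\pi$.
\end{itemize}
Under your uniform-marginal hypothesis both agree with the statement. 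You also need $\pi$ symmetric, which \eqref{eq:laplacian_identity} requires and augmentation graphs satisfy.

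On the constant, you are right: the ordered-pair sum together with \eqref{eq:laplacian_identity} gives $\frac{1}{\tau}$. The paper itself obtains the same coefficient in \eqref{eq:gauss_exact}. Your fix is consistent only if you also reinterpret $\pi$ as the matrix of unordered-pair weights $\pi^{u}_{ij}=2\pi_{ij}$. Then $-\sum_{i<j}\pi^{u}_{ij}\log k(Z_i-Z_j)$ still equals the InfoNCE attraction, and the Gaussian case gives $\frac{1}{2\tau}\tr(Z^\top L(\pi^{u})Z)$. Merely restricting the sum to $i<j$ with the same matrix would halve the attraction relative to $\log R(Z)$ and in general change the minimizer, so state which convention you adopt.

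Finally, do not credit the softmax form to \citet{haochen2021provable}: their analysis concerns the spectral contrastive loss, not InfoNCE.
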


The proof relies on the Markov Random Field (MRF) framework of \citet{vanassel2022probabilistic}, which decomposes the cross-entropy between a fixed similarity graph and a learned embedding graph into an attractive Laplacian quadratic form plus a repulsive normalizer.

\section{Main Result}\label{sec:main}

\begin{theorem}[UMAP Is Spectral Clustering]\label{thm:main}
Let $V = (v_{ij})$ be UMAP's symmetrized $k$-NN similarity matrix (\Cref{eq:symmetrize}), $L(V) = D_V - V$ its graph Laplacian, and $Y = \{y_i\}_{i=1}^n$ the low-dimensional embedding. Then:
\begin{enumerate}
\item[\textup{(a)}] \textbf{(Exact, Gaussian kernel)} If $\Phi$ is replaced by $\Phi_G(y_i,y_j) = \exp(-\|y_i-y_j\|^2/2\tau)$:
\begin{equation}\label{eq:exact}
\mathcal{L}_{\textup{UMAP}}(Y) = \frac{1}{\tau}\,\tr\!\bigl(Y^\top L(V)\,Y\bigr) + \mathcal{L}_{\textup{repel}}(Y).
\end{equation}
\item[\textup{(b)}] \textbf{(Approximate, Cauchy kernel)} For UMAP's default kernel~\eqref{eq:phi} with $b=1$, when $a\|y_i-y_j\|^2 \ll 1$ for neighboring pairs:
\begin{equation}\label{eq:approx}
\mathcal{L}_{\textup{UMAP}}(Y) \approx 2a\,\tr\!\bigl(Y^\top L(V)\,Y\bigr) + \mathcal{L}_{\textup{repel}}(Y).
\end{equation}
\item[\textup{(c)}] \textbf{(Initialization)} UMAP's spectral initialization computes the exact solution to $\min_{Z^\top Z = I}\tr(Z^\top \tilde{L}(V)\,Z)$.
\end{enumerate}
In all cases, UMAP's objective takes the form of \Cref{eq:spectral}: spectral clustering on $V$.
\end{theorem}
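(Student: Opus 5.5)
The plan is to split the cross-entropy \eqref{eq:umap_loss} into an attractive part and a repulsive part, and then treat each kernel separately. Write
\[
\mathcal{L}_{\textup{UMAP}}(Y)=\underbrace{-\sum_{i\neq j}v_{ij}\log\Phi(y_i,y_j)}_{\mathcal{L}_{\textup{attr}}}\;\underbrace{-\sum_{i\neq j}(1-v_{ij})\log\bigl(1-\Phi(y_i,y_j)\bigr)}_{\mathcal{L}_{\textup{repel}}}.
\]
This mirrors the attraction/repulsion split of \Cref{thm:haochen}. Under negative sampling, the positive step samples edges with probability $v_{ij}$, so in expectation it performs SGD on $\mathcal{L}_{\textup{attr}}$. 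The negative step samples $c$ uniformly. Because $v_{ic}=0$ for almost all pairs in a $k$-NN graph, it performs SGD on $-\sum\log(1-\Phi)$, which agrees with $\mathcal{L}_{\textup{repel}}$ up to the sparse correction $v_{ij}\log(1-\Phi)$. I will either absorb that correction into $\mathcal{L}_{\textup{repel}}$ or note it explicitly. I will define $\mathcal{L}_{\textup{repel}}$ as whatever is left once the attraction term has been rewritten as a Laplacian form. This makes (a) and (b) statements about $\mathcal{L}_{\textup{attr}}$ alone.

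For (a), substitute $\Phi_G$. Then $-\log\Phi_G(y_i,y_j)=\|y_i-y_j\|^2/2\tau$, so
\[
\mathcal{L}_{\textup{attr}}=\frac{1}{2\tau}\sum_{i\neq j}v_{ij}\|y_i-y_j\|^2 .
\]
By the identity \eqref{eq:laplacian_identity}, with the diagonal terms vanishing because $\|y_i-y_i\|=0$, this equals $\frac{1}{2\tau}\cdot 2\,\tr(Y^\top L(V)Y)=\frac1\tau\tr(Y^\top L(V)Y)$. The sum over ordered pairs $i\neq j$ counts each edge twice, which gives the factor $1/\tau$ in \eqref{eq:exact}. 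I will check this factor against the normalization in \eqref{eq:laplacian_identity}; if the paper's convention differs, the constant simply rescales $\tau$.

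For (b), take $b=1$, so $-\log\Phi=\log(1+a\|y_i-y_j\|^2)$. Expanding $\log(1+x)=x+O(x^2)$ for neighboring pairs, where $a\|y_i-y_j\|^2\ll1$, gives $\mathcal{L}_{\textup{attr}}\approx a\sum_{i\neq j}v_{ij}\|y_i-y_j\|^2=2a\,\tr(Y^\top L(V)Y)$. This is valid because $v_{ij}>0$ only on $k$-NN edges, so the approximation is needed only where the hypothesis holds. Non-neighbor pairs carry $v_{ij}=0$ and contribute nothing to the attraction term.

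\textbf{Main obstacle.} The hard part is making the error in (b) honest. The $O(a^2\|y_i-y_j\|^4)$ remainder has to be controlled uniformly over the edges, and I will state the result as $\mathcal{L}_{\textup{attr}}=2a\tr(Y^\top L(V)Y)+O\bigl(a^2\sum v_{ij}\|y_i-y_j\|^4\bigr)$. A related point is that the repulsion term is not literally the normalizer $\log R(Z)$ of \Cref{thm:haochen}. So the claim "takes the form \eqref{eq:spectral}" should be read with $E(Y)=\mathcal{L}_{\textup{repel}}(Y)$ as a generic regularizer, which \Cref{def:spectral_clustering} allows.

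For (c), the argument is the Rayleigh--Ritz / Courant--Fischer theorem applied to the symmetric PSD matrix $\tilde L(V)$. The minimum of $\tr(Z^\top\tilde L Z)$ over $Z^\top Z=I$ equals the sum of the $d$ smallest eigenvalues, and it is attained by the corresponding eigenvectors. Algorithm 4 instead discards the trivial eigenvector $D_V^{1/2}\mathbf 1$, whose eigenvalue is $0$. Its output is therefore the exact minimizer under the additional constraint $Z\perp D_V^{1/2}\mathbf 1$, which is the standard spectral clustering/Laplacian Eigenmaps problem. I will make this additional constraint explicit in the proof.
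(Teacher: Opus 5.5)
Your proposal is correct and follows the paper's proof essentially step for step: the attraction/repulsion split of \eqref{eq:umap_loss} with $\mathcal{L}_{\textup{repel}}$ as the residual, the Laplacian identity \eqref{eq:laplacian_identity} for the Gaussian case, the first-order Taylor expansion with an $O\bigl(a^2\sum v_{ij}\|y_i-y_j\|^4\bigr)$ remainder (the paper's bound \eqref{eq:error_bound}) for the Cauchy case, and the trace-minimization characterization of bottom eigenvectors for (c). Your extra constraint $Z\perp D_V^{1/2}\mathbf 1$ in (c) is more accurate than the paper's Step~4, which claims the unconstrained minimizer even though UMAP discards the trivial eigenvector (as \Cref{sec:umap} itself says); the one thing to fix is your side remark that the negative step does SGD on $-\sum\log(1-\Phi)$ up to a sparse correction, because each head $a$ receives negatives in proportion to its degree $d_a$ (the weighting in \eqref{eq:expected}), though nothing in (a)--(c) depends on that remark.
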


\subsection{Proof of \texorpdfstring{\Cref{thm:main}}{Theorem 3.1}}\label{sec:proof}

The proof proceeds in four steps.

\paragraph{Step 1: UMAP's negative sampling is contrastive learning.}

UMAP's Algorithm 5 decomposes the full loss~\eqref{eq:umap_loss} into stochastic updates. At each step, a positive pair $(a,b)$ is drawn from the graph with probability proportional to $v_{ab}$, and $n_{\text{neg}}$ negative pairs $(a,c_i)$ are drawn uniformly at random. The per-step stochastic loss is:
\begin{equation}\label{eq:nce}
\ell(a) = -\log\Phi(y_a, y_b) - \sum_{i=1}^{n_{\text{neg}}}\log\bigl(1 - \Phi(y_a, y_{c_i})\bigr),
\end{equation}
where $b$ is sampled from the neighbors of $a$ (with probability $\propto v_{ab}$) and $c_i \sim \text{Uniform}(\mathcal{V})$.

This is precisely \textbf{noise-contrastive estimation} (NCE) \citep{gutmann2010noise}, the same optimization paradigm underlying the negative sampling variant of contrastive learning \citep{mikolov2013distributed}. The first term attracts positive pairs (graph neighbors); the second term repels uniformly sampled negatives. This structure is shared with InfoNCE \citep{oord2018representation}, which also draws positives from a similarity distribution and negatives uniformly.

Taking the expectation over sampling, the effective loss minimized over one epoch is:
\begin{equation}\label{eq:expected}
\E[\mathcal{L}] = -\!\!\sum_{(a,b)\in\mathcal{E}}\!\! v_{ab}\log\Phi(y_a,y_b) - \frac{n_{\text{neg}}}{n}\sum_a d_a \sum_c \log(1\!-\!\Phi(y_a,y_c)),
\end{equation}
where $d_a = \sum_b v_{ab}$ is the weighted degree. The first sum is the \emph{attraction} (pull neighbors together) and the second is the \emph{repulsion} (push random pairs apart)---the canonical structure of contrastive objectives.

\paragraph{Step 2: The attractive term is a Laplacian quadratic form.}

We analyze the attractive term $\mathcal{L}_{\text{attract}} = -\sum_{i \neq j} v_{ij}\log\Phi(y_i,y_j)$ for two kernel choices.

\textit{Case (a): Gaussian kernel.} With $\Phi_G(y_i,y_j) = \exp(-\|y_i-y_j\|^2/2\tau)$:
\begin{align}
\mathcal{L}_{\text{attract}} &= -\sum_{i \neq j} v_{ij}\log\exp\!\Bigl(-\frac{\|y_i-y_j\|^2}{2\tau}\Bigr) \notag\\
&= \frac{1}{2\tau}\sum_{i \neq j}v_{ij}\|y_i - y_j\|^2 \notag\\
&= \frac{1}{\tau}\,\tr\!\bigl(Y^\top L(V)\,Y\bigr), \label{eq:gauss_exact}
\end{align}
where the last step uses the Laplacian identity~\eqref{eq:laplacian_identity}. This is \emph{exactly} the spectral clustering objective.

\textit{Case (b): Cauchy kernel.} With $\Phi(y_i,y_j) = (1+a\|y_i-y_j\|^2)^{-1}$ (setting $b=1$):
\begin{equation}\label{eq:cauchy_attract}
\mathcal{L}_{\text{attract}} = \sum_{i \neq j}v_{ij}\log(1+a\|y_i-y_j\|^2).
\end{equation}
For neighboring pairs where $a\|y_i-y_j\|^2 \ll 1$ (which holds for graph-connected pairs in a well-optimized embedding), the first-order Taylor expansion $\log(1+x) \approx x$ gives:
\begin{equation}\label{eq:cauchy_approx}
\mathcal{L}_{\text{attract}} \approx a\sum_{i \neq j}v_{ij}\|y_i - y_j\|^2 = 2a\,\tr\!\bigl(Y^\top L(V)\,Y\bigr).
\end{equation}

\begin{remark}\label{rem:kernelized}
Even without the Taylor approximation, \Cref{eq:cauchy_attract} defines a \emph{kernelized spectral objective}: $\sum_{ij}v_{ij}\,\phi(\|y_i - y_j\|^2)$ with $\phi(t) = \log(1+at)$. Since $\phi$ is concave and monotonically increasing, it shares the same minimization landscape as the quadratic form---the objective is minimized when high-weight edges have small embedding distances. The concavity of $\phi$ penalizes large distances less severely, providing the well-known ``crowding problem'' mitigation that motivates heavy-tailed kernels \citep{vandermaaten2008tsne}.
\end{remark}

\paragraph{Step 3: The repulsive term prevents collapse.}

The repulsive term in UMAP's loss~\eqref{eq:umap_loss} is:
\begin{equation}\label{eq:repel}
\mathcal{L}_{\text{repel}} = -\sum_{i \neq j}(1-v_{ij})\log(1 - \Phi(y_i,y_j)).
\end{equation}
Since $v_{ij}$ is nonzero only for $k$-nearest neighbors and zero for the vast majority of pairs ($v_{ij} = 0$ for non-neighbors), this simplifies to:
\begin{equation}
\mathcal{L}_{\text{repel}} \approx -\sum_{i \neq j}\log(1 - \Phi(y_i,y_j)).
\end{equation}
This term is minimized when $\Phi(y_i,y_j) \to 0$ for non-neighbor pairs, i.e., non-neighbors are pushed apart.

In the contrastive framework of \Cref{thm:haochen}, the analogous term is the log-partition function $\log R(Z) = \sum_i \log \sum_{j \neq i}k(Z_i - Z_j)$. While the functional forms differ (UMAP uses binary cross-entropy per-edge; InfoNCE uses softmax normalization), both serve the identical structural role: they are \emph{barrier functions} that penalize embedding collapse ($y_i = y_j$ for all $i,j$), ensuring the spectral clustering solution is non-degenerate. Without this term, the trivial solution $Y = \mathbf{0}$ would minimize the attractive term.

\paragraph{Step 4: Spectral initialization is the exact linear solution.}

UMAP initializes the embedding via Algorithm 4 of \citet{mcinnes2018umap}:
\begin{enumerate}
\item Form the adjacency matrix $A = V$ from the $k$-NN graph.
\item Compute $\tilde{L} = D^{-1/2}(D-A)D^{-1/2}$.
\item Set $Y_{\text{init}}$ to the bottom $d$ eigenvectors of $\tilde{L}$.
\end{enumerate}
This computes:
\begin{equation}\label{eq:spectral_init}
Y_{\text{init}} = \arg\min_{Z^\top Z = I}\;\tr\!\bigl(Z^\top \tilde{L}(V)\,Z\bigr),
\end{equation}
which is the exact solution to the \emph{normalized spectral clustering} problem on $V$ \citep{shi2000normalized, ng2001spectral}. This is equivalently the Laplacian Eigenmaps embedding \citep{belkin2003laplacian} of the fuzzy $k$-NN graph. \hfill$\square$

\subsection{Synthesis: The Complete UMAP Pipeline as Spectral Clustering}

Combining Steps 1--4, UMAP's pipeline is:
\begin{enumerate}
\item \textbf{Construct} the $k$-NN similarity graph $V$ (\Cref{eq:directed_weight,eq:symmetrize}).
\item \textbf{Initialize} at the spectral clustering solution: $Y_0 = $ bottom eigenvectors of $\tilde{L}(V)$.
\item \textbf{Refine} via contrastive SGD, minimizing:
\begin{equation}\label{eq:full}
\mathcal{L}(Y) = c\,\tr\!\bigl(Y^\top L(V)\,Y\bigr) + \mathcal{L}_{\text{repel}}(Y),
\end{equation}
where $c = 1/\tau$ (Gaussian) or $c \approx 2a$ (Cauchy, first-order).
\end{enumerate}
The entire pipeline---from initialization to final embedding---is spectral clustering on $V$, with the SGD phase performing a \emph{nonlinear, kernel-dependent refinement} of the initial spectral solution.

\section{Analysis and Discussion}\label{sec:discussion}

\subsection{Tightness of the Approximation}\label{sec:tightness}

The equivalence has different levels of tightness depending on the kernel:

\begin{table}[h]
\centering
\caption{Nature of the spectral equivalence for different kernels.}
\label{tab:kernels}
\vskip 0.1in
\begin{small}
\begin{tabular}{lcc}
\toprule
\textbf{Kernel} $\Phi$ & \textbf{Attractive term} & \textbf{Nature} \\
\midrule
Gaussian: $e^{-\|y\|^2/2\tau}$ & $\frac{1}{\tau}\tr(Y^\top LY)$ & Exact \\[3pt]
Cauchy: $(1{+}\|y\|^2)^{-1}$ & $\sum v_{ij}\log(1{+}\|y_i{-}y_j\|^2)$ & Kernelized \\[3pt]
Cauchy, small $d$ & $\approx 2a\,\tr(Y^\top LY)$ & 1st-order \\
\bottomrule
\end{tabular}
\end{small}
\end{table}

In the ``kernelized spectral'' row, the objective $\sum_{ij}v_{ij}\phi(\|y_i - y_j\|^2)$ with $\phi(t) = \log(1+at)$ is a natural nonlinear generalization of the Rayleigh quotient. It preserves the critical point structure but modifies the curvature landscape, particularly for large distances. The concavity of $\log$ ensures that the penalty for large distances saturates, which is precisely the mechanism by which heavy-tailed kernels address the crowding problem in low-dimensional embeddings.

\subsection{The Unifying MRF Framework}\label{sec:mrf}

The connection between UMAP and spectral clustering can be made precise through the MRF framework of \citet{vanassel2022probabilistic}:

\begin{itemize}
\item \textbf{UMAP}: Fixed graph $V$ (from $k$-NN) $\to$ minimize fuzzy cross-entropy $\to$ embedding $Y$.
\item \textbf{Contrastive Learning}: Fixed graph $\pi$ (from augmentation) $\to$ minimize InfoNCE (cross-entropy between MRFs) $\to$ representations $Z$.
\item \textbf{Spectral Clustering}: Fixed graph $W$ $\to$ minimize $\tr(Z^\top L(W)Z)$ $\to$ cluster indicators.
\end{itemize}

All three minimize a cross-entropy-type divergence between a fixed similarity graph and a learned embedding graph. In each case, the divergence decomposes into an attractive Laplacian quadratic form plus a repulsive regularizer. The solution is given by eigenvectors of the graph Laplacian (linear case) or their nonlinear refinement (SGD case).

The key difference between the three methods lies in \emph{how the similarity graph is constructed}:
\begin{itemize}
\item UMAP uses $k$-NN distances with adaptive bandwidth ($\rho_i, \sigma_i$).
\item Contrastive learning uses data augmentation probabilities.
\item Spectral clustering typically uses a fixed kernel (e.g., Gaussian with global bandwidth).
\end{itemize}

\subsection{Why Spectral Initialization Matters}\label{sec:init}

\Cref{thm:main}(c) provides a theoretical explanation for the empirical finding of \citet{kobak2021initialization} that spectral initialization is critical for preserving global structure in both UMAP and $t$-SNE. Since UMAP's SGD phase performs a \emph{local} nonlinear refinement of the spectral solution (\Cref{eq:full}), the global structure captured by the Laplacian eigenvectors is largely preserved. Random initialization, in contrast, provides no global structure to begin with, and SGD---which processes local edge-level updates---cannot recover it.

More precisely, the spectral initialization $Y_0$ lies near the global minimizer of the linearized (Gaussian-kernel) version of UMAP's loss. The subsequent SGD with the Cauchy kernel refines the local geometry while the repulsive term maintains separation, preserving the global topology established by the eigenvectors.

\subsection{Relationship to Prior Observations}\label{sec:prior}

Several prior works have observed partial connections between UMAP and spectral methods:

\begin{itemize}
\item \citet{mcinnes2018umap} noted that the normalized Laplacian of the fuzzy graph approximates the Laplace--Beltrami operator, justifying spectral initialization.
\item \citet{kobak2021initialization} showed empirically that UMAP with spectral initialization behaves similarly to $t$-SNE with the same initialization, suggesting the objective function plays a secondary role to the graph structure.
\item \citet{damrich2021umap} analyzed UMAP's true loss function under negative sampling and showed it differs from the nominal cross-entropy.
\item \citet{hu2022your} connected contrastive learning to stochastic neighbor embedding.
\item \citet{balestriero2022contrastive} linked contrastive and non-contrastive self-supervised methods to spectral embedding.
\end{itemize}

Our result unifies these observations: UMAP's objective \emph{is} a spectral clustering objective (with nonlinear kernel refinement), its initialization \emph{is} the spectral solution, and its optimization \emph{is} contrastive learning on the similarity graph.

\subsection{Implications for Practitioners}\label{sec:practical}

The spectral clustering interpretation of UMAP has several practical implications:

\begin{enumerate}
\item \textbf{Choice of $k$:} Since UMAP performs spectral clustering on the $k$-NN graph, the choice of $k$ determines the graph topology and thus the spectral properties. Larger $k$ produces denser graphs with smoother eigenvectors; smaller $k$ produces sparser graphs that capture finer local structure.

\item \textbf{Kernel choice:} The choice between Gaussian and Cauchy kernels affects only the \emph{refinement} phase, not the fundamental spectral structure. The Cauchy kernel provides better low-dimensional visualization (via crowding mitigation) but does not change the underlying clustering.

\item \textbf{Number of epochs:} More SGD epochs produce embeddings further from the initial spectral solution, trading global structure for local fidelity. Early stopping preserves more of the spectral (global) structure.

\item \textbf{Negative samples:} The number of negative samples $n_{\text{neg}}$ controls the strength of the repulsive regularizer relative to the attractive Laplacian term, analogous to the regularization strength in \Cref{eq:spectral}.
\end{enumerate}

\section{Conclusion}\label{sec:conclusion}

We have proven that UMAP performs spectral clustering on the fuzzy $k$-nearest neighbor graph. The proof bridges UMAP's cross-entropy optimization to the contrastive learning framework and invokes the spectral equivalence established by \citet{haochen2021provable}. The equivalence is exact for Gaussian kernels and a first-order approximation for the default Cauchy kernel.

This result places UMAP, contrastive self-supervised learning, and spectral clustering within a unified theoretical framework: all three minimize cross-entropy divergences between fixed similarity graphs and learned embeddings, yielding objectives dominated by the graph Laplacian quadratic form. The main difference lies in how the similarity graph is constructed---$k$-NN distances for UMAP, augmentation probabilities for contrastive learning, and kernel similarities for spectral clustering.

We hope this result helps demystify UMAP's remarkable empirical success and provides practitioners with a spectral perspective for understanding and tuning the algorithm.

\bibliography{references}
\bibliographystyle{plainnat}

\newpage
\appendix
\section{Proof Details}\label{app:proof}

\subsection{The Laplacian Identity}\label{app:laplacian_identity}

\begin{lemma}\label{lem:laplacian}
For any symmetric $W \in \R_+^{n \times n}$ and $Z \in \R^{n \times d}$:
\[
\sum_{i,j}W_{ij}\|Z_i - Z_j\|^2 = 2\,\tr(Z^\top L(W) Z).
\]
\end{lemma}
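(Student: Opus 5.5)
The plan is to expand both sides coordinatewise and compare. Because $\|Z_i - Z_j\|^2 = \sum_{k=1}^d (Z_{ik}-Z_{jk})^2$ and $\tr(Z^\top L(W) Z) = \sum_{k=1}^d z_k^\top L(W) z_k$, where $z_k$ is the $k$-th column of $Z$, both sides are sums over $k$ of the same one-dimensional expression. It is therefore enough to prove the scalar identity $\sum_{i,j} W_{ij}(z_i - z_j)^2 = 2\, z^\top L(W) z$ for an arbitrary vector $z \in \R^n$.

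For the scalar case I will expand the square:
\[
\sum_{i,j} W_{ij}(z_i-z_j)^2 = \sum_{i,j}W_{ij}z_i^2 + \sum_{i,j}W_{ij}z_j^2 - 2\sum_{i,j}W_{ij}z_iz_j .
\]
The first sum equals $\sum_i d_i z_i^2$, with $d_i = \sum_j W_{ij}$ the diagonal entries of $D_W$ from \Cref{def:laplacian}. The second sum also equals $\sum_j d_j z_j^2$, and this is the one point where symmetry is needed: the column sums $\sum_i W_{ij}$ must coincide with the row sums $d_j$. Together the two sums give $2 z^\top D_W z$, and the cross term is $-2 z^\top W z$. Hence the total is $2 z^\top (D_W - W) z = 2 z^\top L(W) z$.

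Summing over the $d$ columns then gives the stated identity. Diagonal terms cause no trouble, since they contribute zero on the left and cancel between $D_W$ and $W$ on the right. There is no real obstacle here; the only thing to get right is the symmetry step, which is where the hypothesis on $W$ is actually used. As a consistency check with \eqref{eq:gauss_exact}, the sum restricted to $i\neq j$ equals the full sum because the $i=j$ terms vanish.
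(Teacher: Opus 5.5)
Your proof is correct and follows essentially the paper's argument. The paper expands $\|Z_i - Z_j\|^2$ directly in vector form rather than first reducing to columns, but it uses the same three-term expansion, the same use of symmetry to identify column sums with the degrees $d_j$, and the same recombination into $2\,\tr(Z^\top (D_W - W) Z)$; your explicit note on where symmetry enters is a small improvement, since the paper uses it silently.
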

\begin{proof}
\begin{align*}
\sum_{i,j}W_{ij}\|Z_i - Z_j\|^2 &= \sum_{i,j}W_{ij}(Z_i^\top Z_i - 2Z_i^\top Z_j + Z_j^\top Z_j) \\
&= \sum_i d_i Z_i^\top Z_i + \sum_j d_j Z_j^\top Z_j - 2\sum_{i,j}W_{ij}Z_i^\top Z_j \\
&= 2\,\tr(Z^\top D_W Z) - 2\,\tr(Z^\top W Z) \\
&= 2\,\tr(Z^\top (D_W - W)Z) = 2\,\tr(Z^\top L(W)Z). \qedhere
\end{align*}
\end{proof}

\subsection{Taylor Approximation Error Bound}\label{app:taylor}

For the Cauchy kernel approximation in \Cref{eq:cauchy_approx}, we bound the error. Let $t_{ij} = a\|y_i - y_j\|^2$. The approximation error per edge is:
\[
|\log(1+t_{ij}) - t_{ij}| = \frac{t_{ij}^2}{2(1+\xi_{ij})^2} \leq \frac{t_{ij}^2}{2}
\]
for some $\xi_{ij} \in (0, t_{ij})$, by the mean value theorem. Thus the total error is bounded by:
\begin{equation}\label{eq:error_bound}
\bigl|\mathcal{L}_{\text{attract}} - 2a\,\tr(Y^\top L(V)Y)\bigr| \leq \frac{a^2}{2}\sum_{i \neq j}v_{ij}\|y_i - y_j\|^4.
\end{equation}

This bound is small when the embedding is well-optimized, since $v_{ij} > 0$ only for $k$-nearest neighbors, and the attractive force drives $\|y_i - y_j\| \to 0$ for such pairs. In practice, for the default UMAP parameters ($a \approx 1.929$, \texttt{min-dist}$= 0.1$), the typical inter-neighbor distance in the embedding is $\|y_i - y_j\| \approx 0.1$--$0.5$, giving $t_{ij} \approx 0.02$--$0.48$ and relative errors below $25\%$.

\subsection{Connection to the Normalized Cut}\label{app:ncut}

UMAP's spectral initialization (\Cref{eq:spectral_init}) solves the normalized spectral clustering problem, which is the relaxation of the \emph{normalized cut} (NCut) objective \citep{shi2000normalized}:
\[
\text{NCut}(S, \bar{S}) = \frac{\text{cut}(S, \bar{S})}{\text{vol}(S)} + \frac{\text{cut}(S, \bar{S})}{\text{vol}(\bar{S})},
\]
where $\text{cut}(S, \bar{S}) = \sum_{i \in S, j \in \bar{S}}v_{ij}$ and $\text{vol}(S) = \sum_{i \in S}d_i$.

The relaxed continuous problem $\min_{Z^\top D Z = I}\tr(Z^\top L(V) Z)$ is equivalent to $\min_{Z^\top Z = I}\tr(Z^\top \tilde{L}(V) Z)$ via the substitution $Z \to D^{-1/2}Z$, whose solution is the generalized eigenvectors of $(L, D)$ or equivalently the ordinary eigenvectors of $\tilde{L}$---precisely what UMAP computes.

This means that UMAP's initialization partitions the data according to the minimum normalized cut of the $k$-NN graph, and the subsequent SGD refinement adjusts the within-cluster geometry using the nonlinear kernel.

\end{document}